\newtheorem{proposition}{Proposition}
\newtheorem{theorem}{Theorem}
\newcommand{\rd}{{\mathrm d}}
\newcommand{\KL}{{\mathrm {KL}}}
\newcommand{\Hess}{{\mathrm {Hess}}}
\newcommand{\Id}{{\mathrm {Id}}}
\newcommand{\J}{{\mathrm {J}}}
\title{Forward-Euler time-discretization for Wasserstein gradient flows can be wrong}
\author{Yewei Xu, Qin Li}
\address{Department of Mathematics, University of Wisconsin-Madison, Madison, WI 53706}
\email{xu464@wisc.edu, qinli@math.wisc.edu}
\date{June 2024}
\begin{document}

\begin{abstract}
In this note, we examine the forward-Euler discretization for simulating Wasserstein gradient flows. We provide two counter-examples showcasing the failure of this discretization even for a simple case where the energy functional is defined as the KL divergence against some nicely structured probability densities. A simple explanation of this failure is also discussed.
\end{abstract}

\maketitle

\smallskip
\noindent \textbf{Keywords.} optimal transport, numerical PDEs, numerical methods

\section{Introduction}
Minimizing a functional over the space of probability measures is a topic that has garnered tremendous amount of interests over the past few years. The optimization problems emerging from practice are increasingly challenging, and many take the form over the probability measure space:
\begin{equation}\label{eqn:min_E}
    \min_{\rho\in\mathcal{P}(\Omega)} F[\rho]
\end{equation}
where $\mathcal{P}(\Omega)$ is the collection of all probability measure over the domain $\Omega\subset\mathbb{R}^d$, $F$ is a functional that maps a probability measure to a scalar. Our task is to find, among infinite many possibilities, the probability measure that gives the least value for $F$. This task can be viewed as an extension from the classical optimization problem usually posed over the Euclidean space $\mathbb{R}^d$. As a counterpart of the classical gradient descent over $\mathbb{R}^d$, we now formulate the gradient flow:
\begin{equation}\label{eqn:GF}
\partial_t\rho = -\nabla_{\mathfrak{m}}F[\rho] = \nabla\cdot\left(\rho\nabla\frac{\delta F}{\delta\rho}\right)\,,\tag{WGF}
\end{equation}
where $\mathfrak{m}$ stands for the chosen metric over $\mathcal{P}$. If we confine ourselves to $\mathcal{P}_2$, the class of probability measures whose second moments are finite, the Wasserstein $W_2$ metric can be deployed, and the gradient can be expressed explicitly.

The structure of the PDE strongly suggests a particle treatment on the numerical level. Indeed the term $-\nabla\frac{\delta F}{\delta\rho}$ can be simply interpreted as the velocity field. Let $X_0$ be drawn from $\rho_0$, the initial distribution, then driven by this velocity field:
\begin{equation}\label{eqn:GF_particle}
\dot{X}_t=-\nabla\left.\frac{\delta F}{\delta\rho}\right|_{\rho_t}(X_t)\quad\Rightarrow\quad \mathrm{Law}(X_t)=\rho_t\,\quad\text{for all time}\,.
\end{equation}
In the equation $\nabla\left.\frac{\delta F}{\delta\rho}\right|_{\rho_t}$ means the gradient is evaluated at $\rho_t$, the solution to~\eqref{eqn:GF} at time $t$. Often in practice, this equation cannot be executed since $\rho_t$ is not known ahead of time. So numerically the dynamics is replaced by an ensemble of particles (sometimes also termed the particle method):
\[
\rho_t\approx\frac{1}{N}\sum_i\delta_{x_i(t)}\,,\quad\text{with}\quad\dot{x}_i=-\nabla\left.\frac{\delta F}{\delta\rho}\right|_{\rho_t}(x_i(t))\,.
\]
This is now a coupled ODE system of size $Nd$. Running ODEs has been a standard practice. Most literature choose an off-the-shelf solver, and the simplest among them is the classical Forward Euler (shorthanded by FE throughout the paper). Setting $h$ to be a very small time-stepping, we can march the system forward in time:
\begin{equation}\label{eqn:FE_particle}
    {x}_i^{n+1}=x_i^n-h\nabla\left.\frac{\delta F}{\delta\rho}\right|_{\rho^n}(x^n_i)\,,
\end{equation}
where the superindex stands for the time step, and the lowerindex stands for the choice of the particle. This numerical strategy is vastly popular: Potentially for its simplicity and the lack of counter-argument, it has been the go-to method in many engineering problems, see~\cite{AF21, WCL22, YZL23, YENM23, JLWYZ24, NNSSR24, HTT24}.

We would like to alarm, in our very short note, that this is probably a dangerous practice. In particular, we will give two counter-examples, both framed in the very simple setting where $F$ is a KL divergence against a nicely constructed target distribution, and show that~\eqref{eqn:FE_particle} could lead to erroneous answer. In fact, we will go back to a even more fundamental formula~\eqref{eqn:GF_particle} and show the forward Euler conducted at this level is problematic.

Both examples use the KL divergence, with the target distribution being very smooth, and log-concave. The specific introduction of the problem has different origins, but they share similar features. FE brings the cascading effect along iteration: It decreases the smoothness of the potential in each iteration, and thus after finite rounds of iteration, the distribution at hand is no longer smooth and no longer has finite Wasserstein derivative. This loss of regularity can be introduced by the non-injectivity of the pushforward map, or can be rooted in the consumption of two derivatives in the update, as will be demonstrated in the two examples respectively.

This discovery might seem surprising at the first glance, but it really resonates the classical discovery in numerical PDEs, where most of solvers perform discretization in space before that in time~\cite{L07}. This includes the famous ``Method of lines"~\cite{HSG09} and ``spectral method"~\cite{HGG07}. At the heart of the thought is usually termed the stability theory: time discretization is restricted by the largest spectrum of the PDE operator at hand. Since PDE operators are typically unbounded, discretization has to be done in space first, to threshold the largest spectrum, allowing finite time-stepping. This thinking is somewhat hidden in particle-method for gradient-flow, but the restriction should not be overlooked.

This discovery also brings us back to the classical proposal in the seminar work of~\cite{JKO98}, where the authors essentially proposed an implicit Euler solver for advancing~\eqref{eqn:GF}. Largely recognized as non-solvable, efforts have been made towards developing alternatives~\cite{P15, SKL20, LLW20, MKLGSB21, JLL21, CMW21, FOL23, LWL23, YHY24}. Many such alternatives need to go through similar test on stability, but this task is beyond the scope of the current note.

Section~\ref{sec:prelim} collects some classical derivation on Wasserstein derivative. Readers familiar with the matter should feel safe to skip it. Section~\ref{sec:counterexamples} provides the two counter-examples and the shared general theory that we summarize.

\section{Basics and notations for gradient flow}\label{sec:prelim}
We start off with some quick summary of preliminary results.

The problem under study~\eqref{eqn:min_E} is an optimization over the probability measure space, and it is viewed as an extension of optimization over the Euclidean space, $\min_{x\in\mathbb{R}^d} E(x)$. As a consequence, it is expected that many techniques for $\min_{x\in\mathbb{R}^d} E(x)$ can be carried over. The simplest gradient descent finds $x$ by evolving the following ODE:
\[
\dot{x}=\frac{\rd}{\rd t}x = -\nabla_xE\,,\quad\text{so}\quad
\lim_{t\to\infty}E(x(t))=\min_xE(x)\,,
\]
when some properties of $E$ are taken into account, non-asymptotic convergence rate can also be derived.

To translate this strategy to solve problem~\eqref{eqn:min_E} is not entirely straightforward. The challenge here is two-folded:
\begin{itemize}
    \item[--] $\mathcal{P}(\Omega)$ is infinite dimensional. In the classical optimization problem, $x\in\mathbb{R}^d$ and is finite-dimensional. $\rho\in\mathcal{P}(\Omega)$ is an infinite dimensional object. To push $\rho$ around necessitates the language of partial differential equation. The PDE will characterize the evolution of $\rho(t,\cdot)$ over $\mathcal{P}(\Omega)$ in time.
    \item[--] $\mathcal{P}(\Omega)$ is a nonlinear space. Unlike the Euclidean space and the Hilbert function space that can be ``normed," $\mathcal{P}(\Omega)$ is nonlinear, and thus is a manifold. Without a proper definition of the metric, even the term ``gradient" is not valid. Due to the nonlinearity, metrics have to be defined locally to measure the distance between two probability measures. Many choices are available~\cite{A16,PW24}. Among them it has been prevalent to deploy the Wasserstein metric~\cite{FZMAP15, SQZY18, BJGR19, GK23}. It is a term that is derived from optimal transport theory that measures the length of geodesics between two probability measures. On this metric, one can define the local tangent plane and gradient with respect to this metric. This is the metric that we will use throughout the paper.
\end{itemize}
Under this metric, it is a standard practice to lift up gradient descent to gradient flow, and we arrive at~\eqref{eqn:GF}.

\subsection{Basic Notations}
Throughout the paper, we study objects within $\mathcal{P}_2(\mathbb{R}^d)$, the collection of probability measures of finite second moment on $\mathbb{R}^d$. This space has an important subset $\mathcal{P}_2^r(\mathbb{R}^d)$ that collects all probability measures that are Lebesgue absolutely continuous. Then for any $\rho \in \mathcal{P}_2^r(\mathbb{R}^d)$, we can associate it with a probability density function (pdf) $p : \mathbb{R}^d \to [0,+\infty)$ such that $\rd\rho = p\rd{x}$. For easier notation, we write $p \propto \exp(-U)$ for some $U : \mathbb{R}^d \to \mathbb{R}$. We may use $\rho$ and $\exp(-U)$ interchangeably when the context is clear. The measure $\rho$ is said to be log-concave if $U$ is convex, and log-smooth if $U$ is smooth.

Over the space of $\mathcal{P}_2$, we define a functional $F : \mathcal{P}_2(\mathbb{R}^d) \to \mathbb{R}$. For every $\rho$ define the $W_2$ subgradient of $F$ as:
\[
\nabla \left.\frac{\delta F}{\delta \rho} \right|_{\rho} \,,
\]
i.e., the gradient of the first variation of $F$ with respect to the measure $\rho$.
We denote $D(\vert \partial F \vert)$ the collection of $\rho$ whose subgradient has finite slope in $L_2(\rho)$(\cite[Lemma 10.1.5]{AGS08}), and for $\rho_t\in D(\vert \partial F \vert)$, gradient descent continuous-in-time over the Wasserstein metric gives~\eqref{eqn:GF}.

When FE discretization is deployed, we have the updates:
\begin{equation}\label{eqn:forwardEuler_rho}
\rho_{k+1} = (\text{Id} - h\left.\nabla\frac{\delta F}{\delta \rho}\right\vert_{\rho_k})_\sharp \rho_k \quad\text{ for }\quad k = 0,1,2,\dots \,.\tag{FE}
\end{equation}
with $\rho_k$ standing for the $k$-th iteration solution, $h$ is the time-stepping step-size. $\text{Id}$ is the identity map, and $\sharp$ is the classical pushforward operator. This is the most standard numerical solver used for~\eqref{eqn:GF}, and has been deployed vastly in literature.

According to the definition of the pushforward operator, for any Borel measure $\sigma$ on $X$, $T_\sharp\sigma$ denotes a new probability measure that satisfies:
\[
(T_\sharp \sigma) (A) = \sigma(T^{-1}(A)) \quad\text{ for any}\quad A \subseteq Y\,,
\]
when the given $T : X \to Y$. Here $T$ does not have to be invertible, and $T^{-1}$ only refers to the pre-image. As a consequence, if both measures have densities that are denoted by $\rd \left(T_\sharp\sigma\right) = q\rd y$ and $\rd \sigma = p\rd x$, then:
\[
(T_\sharp \sigma) (A) = \int_A q(y)\rd{y} = \int_{T^{-1}(A)} p(x) \rd x = \sigma(T^{-1}(A))\,,
\]
When $T$ is a diffeomorphism, by the change of variable formula for measures,
\begin{equation}\label{eqn:density_push}
q(y) = p(T^{-1}(y)) \vert \J_{T^{-1}}(y) \vert\,.
\end{equation}
where $\J_{T^{-1}}$ stands for the Jacobian of the inverse map of $T$, and $\vert \cdot \vert$ means taking the absolute value of the determinant of the given matrix.

\subsection{KL Divergence as the energy functional}
We pay a special attention to the energy functional that has the form of the Kullback-Leibler(KL) divergence.

Given two probability measures $\mu,\gamma \in \mathcal{P}_2(\mathbb{R}^d)$, the Kullback-Leibler(KL) divergence of $\mu$ with respect to $\gamma$ is defined as:
\begin{equation}
\text{KL}(\mu \vert \gamma) := \begin{cases}
\int_{\mathbb{R}^d} \frac{d \mu}{d \gamma} \ln(\frac{d \mu}{d \gamma}) d \gamma & \text{if } \mu \ll \gamma\,, \\
+\infty & \text{Otherwise}\,,
\end{cases}
\end{equation}
where $\frac{d \mu}{d \gamma}$ is understood in the sense of Radon-Nikodym derivatives. If in addition, $\mu$ and $\gamma$ are absolutely continuous with density functions denoted as $e^{-V}$ and $e^{-U}$ respectively for some differentiable $U$ and $V$, then
\[
F(\mu):= \text{KL}(\mu \vert \gamma)= \int_{\mathbb{R}^d} (U-V)\exp(-V) \rd x\,,
\]
and the subgradient (velocity field) is explicitly solvable:
\begin{equation}\label{eqn:subgradient}
\nabla \left.\frac{\delta F}{\delta \mu}\right|_\mu(x) = -\nabla V(x) + \nabla U(x) \ .
\end{equation}

\section{Counter-examples}\label{sec:counterexamples}
We provide two counter-examples in subsection~\ref{sec:examples} to invalidate the approach of~\eqref{eqn:forwardEuler_rho}. The failure of~\eqref{eqn:forwardEuler_rho} is rooted in the loss of regularity. According to~\cite{AGS08} Section 10, only when $\rho\in D(|\partial F|)$ can we properly define the differential and thus run~\eqref{eqn:GF} forward in time. We will show that $\rho_k$ will drop out of the set even if $\rho_0$ is prepared within, putting~\eqref{eqn:GF} to halt. There are two sources of this error, and the two examples showcase them respectively. One reason for the loss of regularity is the non-injectivity of the pushforward map used to march to $\rho_{k+1}$ from $\rho_k$. The map sends multiple $x$'s to the same $y$, the density accumulates in a certain region, and a jump discontinuity is introduced at the boundary. The second reason for the loss of regularity is the consumption of two derivatives in the pushforward map. As a consequence, every $\rho_k$ has two fewer derivatives than its prior. If $\rho_0$ has finite regularity, then in finite time, $\rho_k$ drops out of $D(\vert\partial F\vert)$, halting the running of~\eqref{eqn:GF}. This argument can be made more precise and we discuss a proposition in subsection~\ref{sec:regularity}.

These results suggest that even though FE discretization is intuitive, the failure is universal.

\subsection{Examples}\label{sec:examples}

The two examples shown below respectively point towards two sources of loss of regularity.
\subsubsection*{Example 1: Loss of regularity due to the noninjectivity of the pushforward map}
In the first example, we set
\[
F(\rho) = \KL(\rho \vert e^{-U})\,,\quad\text{with}\quad U(x) = \frac{x^2}{2} + \frac{x^4}{4} + C_0 \,.
\]
This functional is well defined over $\mathcal{P}_2^r(\mathbb{R})$, and the constant $C_0$ is chosen to normalize the measure:
\[
C_0 = \ln\left(\int_{\mathbb{R}} \exp(-\frac{x^2}{2} - \frac{x^4}{4})\rd x\right) \,.
\]
We will show the failure of the~\eqref{eqn:forwardEuler_rho} approach. Setting the initial condition $\rho_0$ to be the very simple Gaussian function, by marching forward by just one step of size $h$, we demonstrate that $\rho_1 \not \in D(\vert \partial F \vert)$.

\begin{proposition}
Assume the initial distribution $\rho_0(x)$ has the density of
\[
p_0(x)=\frac{1}{\sqrt{2 \pi}} \exp(-\frac{x^2}{2})\,,
\]
then for any $h>0$, the one-time-step solution $\rho_1$ also has a density $p_1$, with $p_1$ being discontinuous, and thus $\rho_1 \not \in D(\vert \partial F \vert)$, stopping the application of~\eqref{eqn:forwardEuler_rho} to produce $\rho_2$.
\end{proposition}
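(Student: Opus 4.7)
\bigskip
\noindent\textbf{Proof proposal.}
The plan is to write the one-step map explicitly, show that it fails to be injective because of its cubic character, and then read off the discontinuity of $p_1$ from the change-of-variables formula for pushforwards.

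First, I would use formula~\eqref{eqn:subgradient} with $V(x)=\tfrac{x^2}{2}+\tfrac{1}{2}\ln(2\pi)$ and $U(x)=\tfrac{x^2}{2}+\tfrac{x^4}{4}+C_0$, so that the velocity field at $\rho_0$ simplifies dramatically:
\[
\nabla\left.\tfrac{\delta F}{\delta\rho}\right|_{\rho_0}(x) \;=\; -\nabla V(x)+\nabla U(x) \;=\; -x+(x+x^3) \;=\; x^3.
\]
Consequently the FE map is $T(x)=x-hx^3$, and $\rho_1=T_\sharp\rho_0$.

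Next I would study $T$ as a real map. Since $T'(x)=1-3hx^2$, the map has critical points at $x_\pm=\pm 1/\sqrt{3h}$, with $T(x_\pm)=\pm \tfrac{2}{3\sqrt{3h}}=:\pm y^\ast$. A brief monotonicity table shows that $T$ is strictly decreasing on $(-\infty,-1/\sqrt{3h})$, increasing on $(-1/\sqrt{3h},1/\sqrt{3h})$, and decreasing on $(1/\sqrt{3h},\infty)$, and that for $|y|<y^\ast$ the equation $T(x)=y$ has three simple roots while for $|y|>y^\ast$ it has a unique root on the far-left (resp.\ far-right) branch; at $y=\pm y^\ast$ two of those roots coalesce at $x_\pm$. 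This non-injectivity is the mechanism flagged in the introduction.

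Third, on each open interval where $T$ is a diffeomorphism onto its image, I apply~\eqref{eqn:density_push} and add contributions to obtain
\[
p_1(y) \;=\; \sum_{x:\,T(x)=y}\frac{p_0(x)}{|1-3hx^2|}.
\]
For $y$ approaching $y^\ast$ from below, two of the three preimages approach $x_+=1/\sqrt{3h}$, where the Jacobian vanishes. A local Taylor expansion $T(x)-y^\ast=-3hx_+(x-x_+)^2+O((x-x_+)^3)$ converts $1/|T'(x)|\sim \bigl(3hx_+(y^\ast-y)\bigr)^{-1/2}$, so the two nearby preimages each contribute a term of order $(y^\ast-y)^{-1/2}$, giving $p_1(y)\to+\infty$. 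For $y$ approaching $y^\ast$ from above, only the single far-left preimage survives and the Jacobian there is bounded away from zero, so $p_1(y)$ stays finite. This produces the claimed discontinuity of $p_1$ at $y=\pm y^\ast$, showing $\rho_1$ already fails to have a continuous density after a single step.

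Finally, to conclude $\rho_1\notin D(|\partial F|)$, I would note that near $y^\ast$, $p_1(y)\sim C(y^\ast-y)^{-1/2}\mathbbm{1}_{y<y^\ast}$, hence $\partial_y\ln p_1\sim \tfrac{1}{2(y^\ast-y)}$ on that side. Since the subgradient along $\rho_1$ involves $\nabla\ln p_1+\nabla U$ by~\eqref{eqn:subgradient}, the local $L^2(\rho_1)$-norm of the subgradient near $y^\ast$ behaves like $\int^{y^\ast}(y^\ast-y)^{-2}\cdot(y^\ast-y)^{-1/2}\,\rd y=\int^{y^\ast}(y^\ast-y)^{-5/2}\,\rd y$, which diverges. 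By the characterization of the finite-slope domain in~\cite[Lemma 10.1.5]{AGS08}, this puts $\rho_1$ outside $D(|\partial F|)$ and halts the next update. The main obstacle I anticipate is making the last divergence argument cleanly match the definition of $D(|\partial F|)$ used in the paper; everything else is a routine computation with the cubic $T$.
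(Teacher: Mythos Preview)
Your proposal is correct and follows essentially the same line as the paper: compute $T(x)=x-hx^3$, observe the three-to-one folding for $|y|<y^\ast=\tfrac{2}{3\sqrt{3h}}$, and read off the blow-up of $p_1$ at $y^\ast$ from the vanishing Jacobian there. The only substantive difference is the final step. The paper concludes $\rho_1\notin D(|\partial F|)$ by noting that the blow-up already forces $p_1\notin W^{1,1}_{\mathrm{loc}}(\mathbb{R})$ and then invoking \cite[Theorem~10.4.9]{AGS08}; you instead push the asymptotics further to get $p_1(y)\sim C(y^\ast-y)^{-1/2}$ and show directly that $\int|\partial_y\ln p_1+\partial_y U|^2\,p_1\,\rd y$ diverges near $y^\ast$. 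Your route makes the failure quantitative and your exponent $-1/2$ is correct (the paper's footnote suggesting $\epsilon^{-1/3}$ appears to be a slip), but the paper's route is cleaner and in fact resolves exactly the obstacle you anticipate: once $p_1\notin W^{1,1}_{\mathrm{loc}}$, the AGS characterization places $\rho_1$ outside $D(|\partial F|)$ without any need to make sense of $\nabla\ln p_1$ as an $L^2(\rho_1)$ object.
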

\begin{proof}
According to~\eqref{eqn:forwardEuler_rho},
\[
\rho_1 = T_\sharp \rho_0 \quad\text{ with }\quad  T(x) := x - h \left.\nabla \frac{\delta F}{\delta \rho}\right|_{\rho_0}(x) \ .
\]
We can explicitly compute $p_1$. In particular, calling~\eqref{eqn:subgradient}, and noticing
\[
V(x) = \frac{x^2}{2} + \ln(2\sqrt{\pi})  \ ; \ U(x) = \frac{x^2}{2} + \frac{x^4}{4} + C_0\,,
\]
we have:
\begin{equation}\label{eqn:T_ex1}
T(x)= x - h (\nabla U(x) - \nabla V(x)) = x - hx^3\,.
\end{equation}
It is important to notice that $T$ is not one-to-one, as seen in Figure~\ref{fig:pushforward_T}.
\begin{figure}[htb]
    \centering
    \includegraphics[width = 0.48\textwidth]{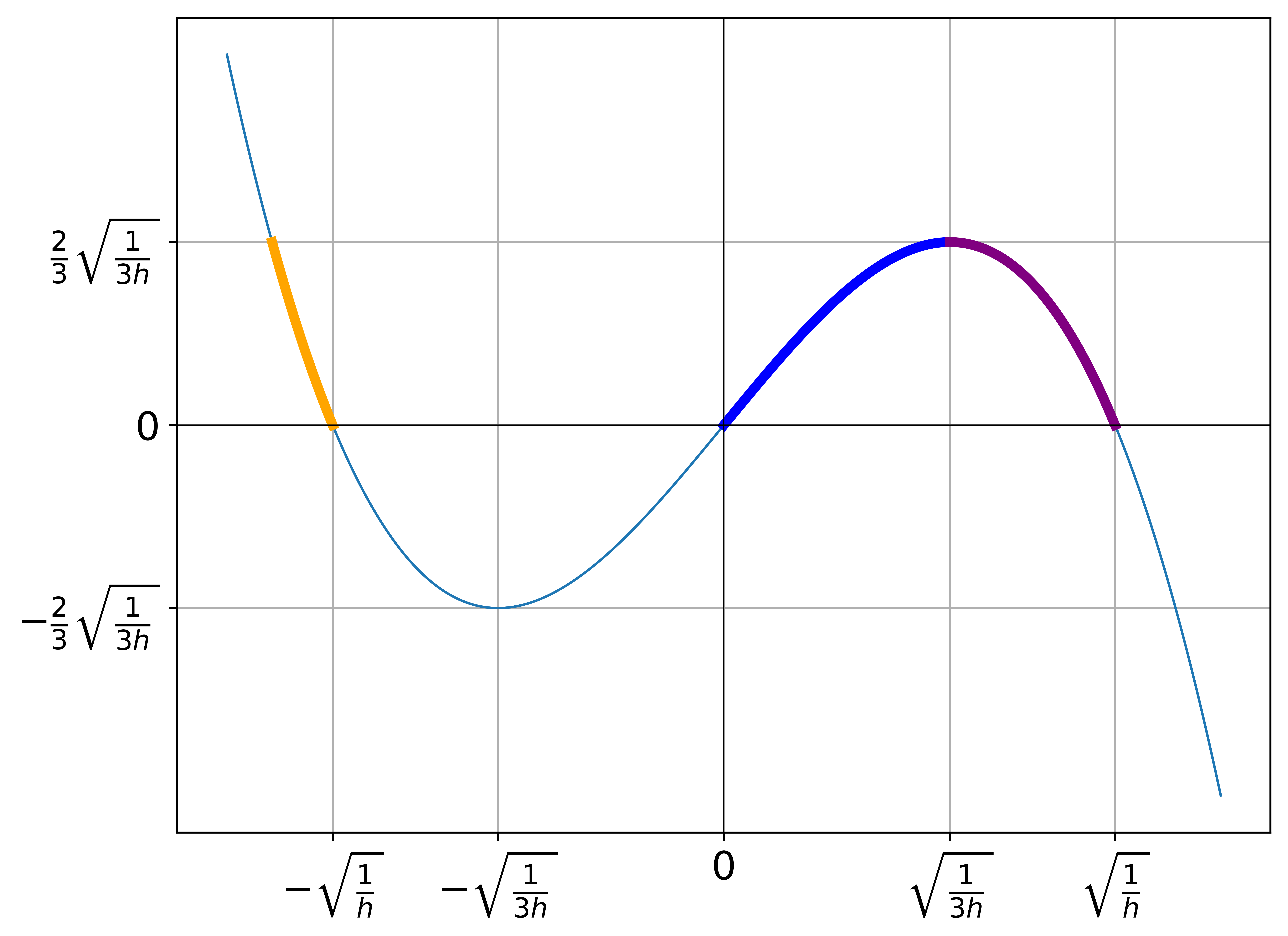}
    \includegraphics[width = 0.48\textwidth]{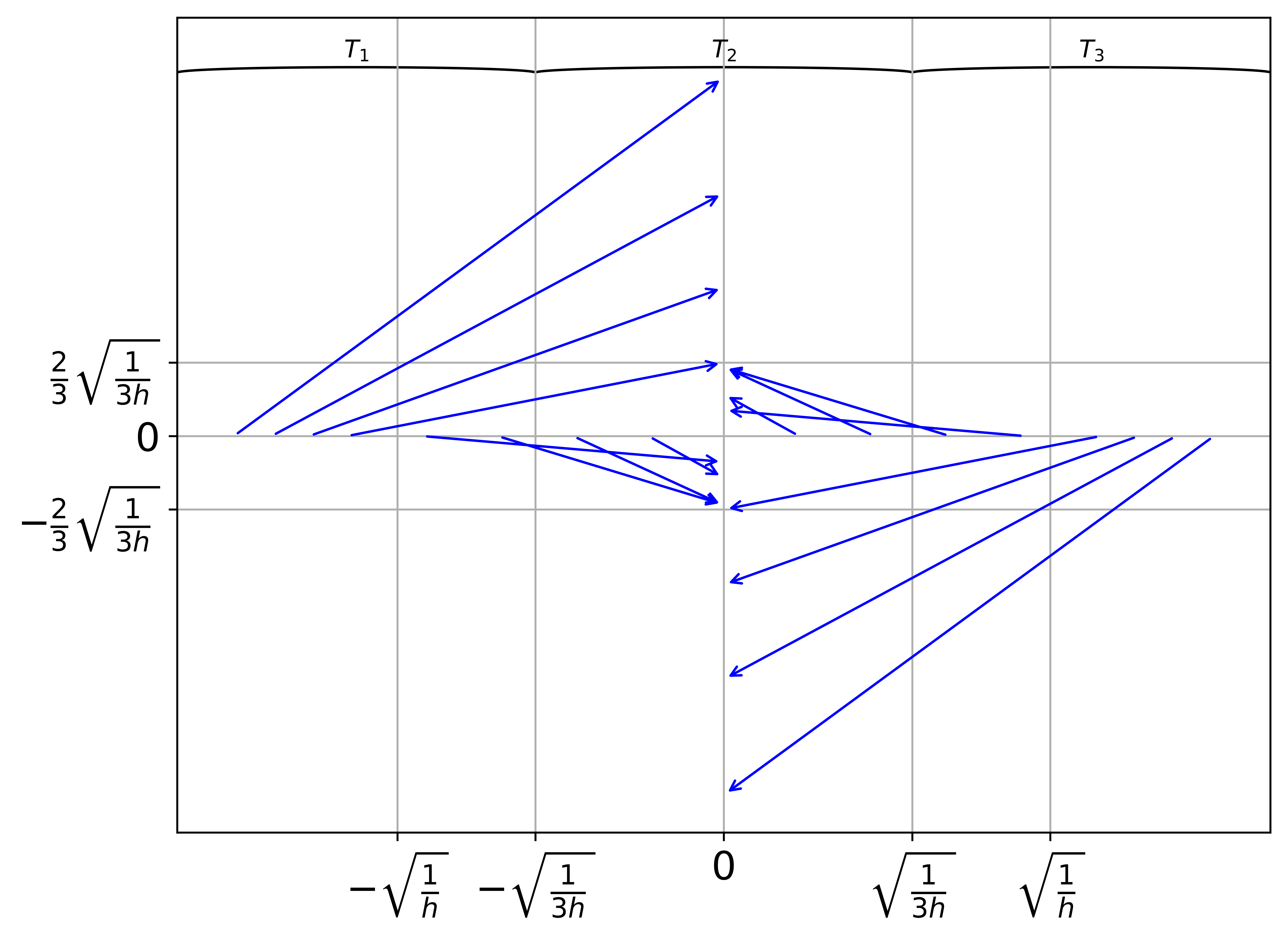}
    \caption{The plot on the left shows the pushforward function $T$. Note that every $y \in (0,\frac{2}{3}\sqrt{\frac{1}{3h}})$ has three pre-image points of $x$. For example, for $y\in[0,\frac{2}{3}\sqrt{\frac{1}{3h}}]$, the different parts of pre-image are bold-lined: $(-\infty,-\sqrt{\frac{1}{3h}}]$, $[-\sqrt{\frac{1}{3h}},\sqrt{\frac{1}{3h}}]$ and $[\sqrt{\frac{1}{3h}},\infty)$ respectively. The plot on the right shows the mapping of $T$. When $T$ is confined in three parts of the region, denoted as $T_1, T_2, T_3$, injectivity is resumed.}
\label{fig:pushforward_T}
\end{figure}

In particular, every point in the range of $y \in (0,\frac{2}{3}\sqrt{\frac{1}{3h}})$ has three pre-image points of $x$. Only confined in smaller $x$-regions, injectivity is resumed and inversion of $T$ is well-defined, see Table~\ref{tab:my_label}.
Recall that $T(x) = x - hx^3$, so for any given pair $(x,y)$ such that $T_i(x) = T(x) = y$, we have the determinant of the Jacobian being $\vert \text{J}_{T_i}(x)\vert = \vert \text{J}_{T}(x)\vert = \vert 1 - 3 h x ^2 \vert$. When $x \neq \pm\sqrt{\frac{1}{3h}}$, we define:
\[
\vert \text{J}_{i}(y)\vert \doteq \vert \text{J}_{T^{-1}_i(y)}\vert=\frac{1}{\vert \text{J}_{T_i}(x)\vert} = \frac{1}{\vert 1 - 3 h x ^2 \vert} \ .
\]

\begin{table}[h!]
    \centering
    \begin{tabular}{|c|c|c|}
    \hline
    Map & domain of $x$ & range of $y$ \\
    \hline
         $T_1$ & $(-\infty, -\sqrt{\frac{1}{3h}})$ & $y \in (-\frac{2}{3}\sqrt{\frac{1}{3h}},+\infty)$\\
         $T_2$ & $(-\sqrt{\frac{1}{3h}},\sqrt{\frac{1}{3h}})$ & $y \in (-\frac{2}{3}\sqrt{\frac{1}{3h}},\frac{2}{3}\sqrt{\frac{1}{3h}})$ \\
         $T_3$ & $(\sqrt{\frac{1}{3h}},+\infty)$ & $y \in (-\infty, \frac{2}{3}\sqrt{\frac{1}{3h}})$ \\
         \hline
    \end{tabular}
    
    \caption{Range of $T$ in different parts of domains of injectivity.}\label{tab:my_label}
\end{table}

The loss of such injectivity prevents us to directly apply~\eqref{eqn:density_push}, however, similar derivative nevertheless can be carried out. We focus on $y>0$ and summarize the calculations here:
\begin{itemize}
\item At $y=\frac{2}{3}\sqrt{\frac{1}{3h}}$, there are two points in the pre-image. Since $\rho_0$ is Lebesgue absolutely continuous, $\rho_1(\{\frac{2}{3}\sqrt{\frac{1}{3h}}\}) = 0$.

\item In $(\frac{2}{3}\sqrt{\frac{1}{3h}},\infty)$, every $y$ has one pre-image point given by $T_{1}^{-1}$.
Deploying~\eqref{eqn:density_push}, we have:
\begin{equation}\label{eqn:p1_ex1_a}
p_1(y) = p_0(T_1^{-1}(y)) \vert \J_1(y) \vert \,.
\end{equation}

\item In $(0,\frac{2}{3}\sqrt{\frac{1}{3h}})$, every $y$ has three pre-image points given by $T_{1}^{-1}$, $T_{2}^{-1}$ and $T_{3}^{-1}$ respectively \footnote{The pushforward measure on $(0,\frac{2}{3}\sqrt{\frac{1}{3h}})$ is equal to $(T_1)_\sharp (\rho_0\left.\right\vert_{T_1^{-1}(\frac{2}{3}\sqrt{\frac{1}{2h}},-\sqrt{\frac{1}{h}})}) + (T_2)_\sharp (\rho_0\left.\right\vert_{(0,\sqrt{\frac{1}{3h}})}) + (T_3)_\sharp (\rho_0\left.\right\vert_{(\sqrt{\frac{1}{3h}},\sqrt{\frac{1}{h}})})$, where $\rho_0\left.\right\vert_{A})$ denotes the measure confined on the set $A$. Each of $T_1, T_2, T_3$ confined on their domains is a diffeomorphism, and therefore we may apply the change of variable formula to compute the densities of the pushforward measures.}. The new density is:
\begin{equation}\label{eqn:p1_ex1_b}
\begin{aligned}
& p_1(y) =& p_0(T_1^{-1}(y)) \vert  \J_1(y) \vert + p_0(T_2^{-1}(y)) \vert  \J_2(y) \vert + p_0(T_3^{-1}(y)) \vert \J_3(y) \vert  \,.
\end{aligned}
\end{equation}
\end{itemize}
Noting that $\lim_{y \uparrow \frac{2}{3}\sqrt{\frac{1}{3h}}} T_2^{-1}(y) = \sqrt{\frac{1}{3h}} = \lim_{y \uparrow \frac{2}{3}\sqrt{\frac{1}{3h}}} T_3^{-1}(y)$, so
\begin{equation}\label{eqn:limit_p_singular}
\lim_{y \uparrow \frac{2}{3}\sqrt{\frac{1}{3h}}} p_0(T_2^{-1}(y)) = p_0(\sqrt{\frac{1}{3h}}) = \lim_{y \uparrow \frac{2}{3}\sqrt{\frac{1}{3h}}} p_0(T_3^{-1}(y)) >0\,,
\end{equation}
and
\begin{equation}\label{eqn:limit_J_singular}
\lim_{y \uparrow \frac{2}{3}\sqrt{\frac{1}{3h}}} \vert \J_2(y)\vert = \lim_{x \uparrow \sqrt{\frac{1}{3h}}} \frac{1}{\vert 1 - 3 h x^2 \vert} = +\infty = \lim_{x \downarrow \sqrt{\frac{1}{3h}}} \frac{1}{\vert 1 - 3 h x^2 \vert} = \lim_{y \uparrow \frac{2}{3}\sqrt{\frac{1}{3h}}} \vert \J_3(y)\vert\,.
\end{equation}
Plugging~\eqref{eqn:limit_p_singular}-\eqref{eqn:limit_J_singular} back in~\eqref{eqn:p1_ex1_b}, we notice a blow-up solution at $y=\frac{2}{3}\sqrt{\frac{1}{3h}}$\footnote{We should note that it roughly behaves as $\frac{1}{\sqrt[3]{\epsilon}}$ for small $\epsilon$ and is an integrable singularity.}. Comparing~\eqref{eqn:p1_ex1_a} and~\eqref{eqn:p1_ex1_b}, it is now immediate that $p_1(y)$ has a jump discontinuity at $y = \frac{2}{3}\sqrt{\frac{1}{3h}}$, which means the density associated with the probability measure $\rho_1$, $p_1 \notin W_{\text{loc}}^{1,1}(\mathbb{R})$. By \cite[Theorem 10.4.9]{AGS08}, $\rho_1 \not \in D(\vert \partial F \vert)$. The same derivation applies to $p_1(-\infty,-\frac{2}{3}\sqrt{\frac{1}{3h}})$. 
\end{proof}

In this example, the initial condition is a very nice distribution, and the target distribution is log-concave and smooth. Problem is introduced by the non-injectivity of the pushforward map $T$. More precisely, for $y \in (\frac{2}{3}\sqrt{\frac{1}{3h}}, +\infty)$, only the part $T_1$ contributes to the density here.
For $y = (0,\frac{2}{3}\sqrt{\frac{1}{3h}})$, however, all of $T_1, T_2, T_3$ contribute to the measure. Since the pushforward densities of $T_2$ and $T_3$ are not vanishing at the border of $y = \frac{2}{3}\sqrt{\frac{1}{3h}}$, a jump discontinuity is introduced. Such jump discontinuity is observed for any finite $h$. As $h\to0$, according to~\eqref{eqn:T_ex1}, $T=x$, also seen in Figure~\ref{fig:pushforward_h_comp}. The map converges to the identity map, and the distribution does not move. The problem only occurs to discrete-in-time setting introduced by running~\eqref{eqn:forwardEuler_rho}.

\begin{figure}[htb]
    \centering
    \includegraphics[width = 0.48\textwidth]{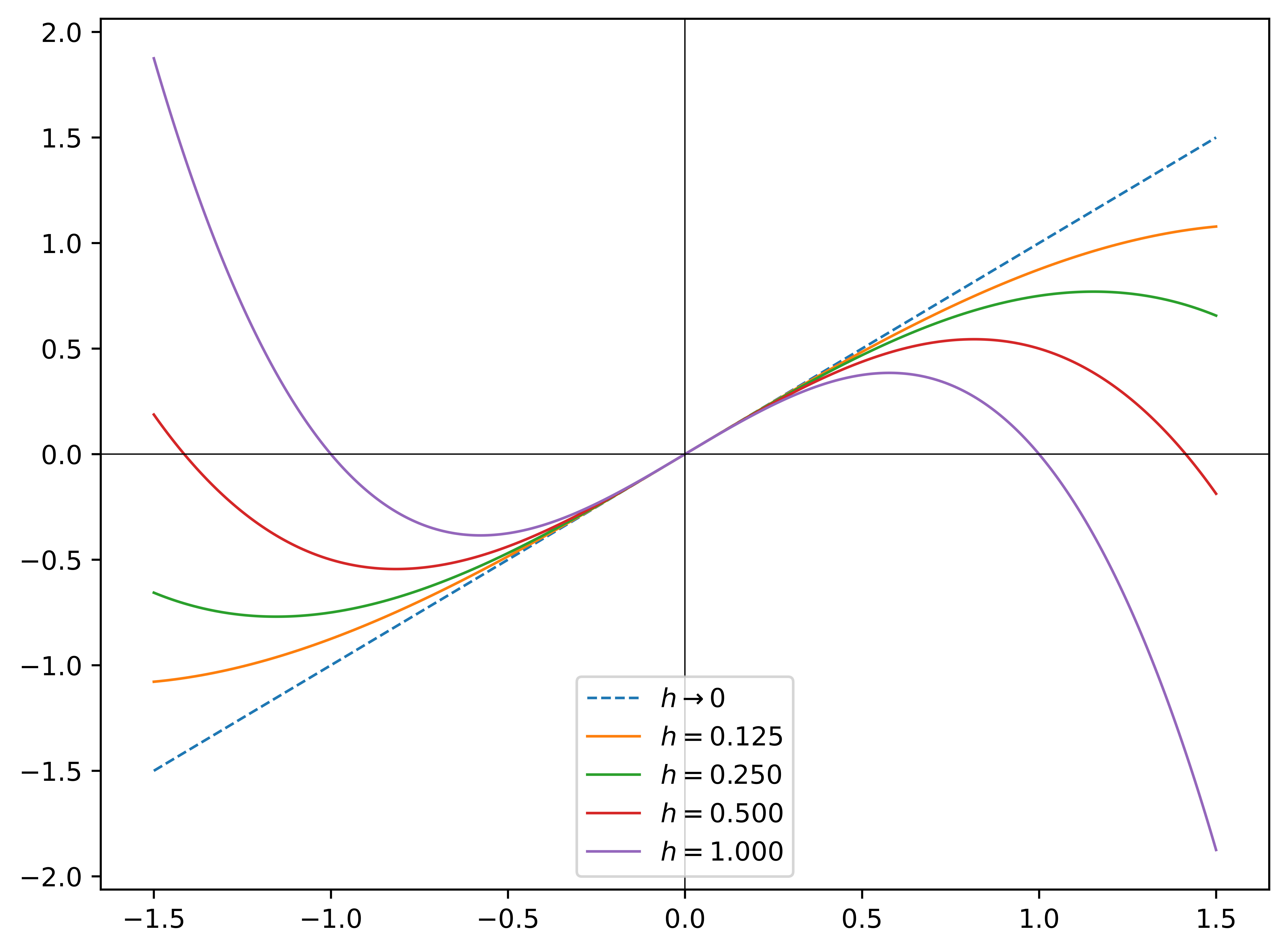}
    \caption{Plot of pushforward functions $T$ for different step sizes $h$. In the $h\to0$ limit, $T(x)=x$.}
\label{fig:pushforward_h_comp}
\end{figure}

\subsubsection*{Example 2: Loss of regularity due to the consumption of derivatives.}

The second example share similar features as the first, but the irregularity is introduced through a different manner. The problem is set as:
\[
F(\rho) = \KL(\rho \vert e^{-U})\,,\quad\text{with}\quad U(x) = \frac{x^2}{2} + \ln(2\sqrt{\pi})\,.
\]
Here the constant $\ln(2\sqrt{\pi})$ takes care of the normalization. As in the previous example, the energy function is well defined over $\mathcal{P}_2^r(\mathbb{R})$. According to~\eqref{eqn:forwardEuler_rho}, the pushforward for every iteration is:
\begin{equation}\label{eqn:FE_rho_ex2}
\rho_{k+1} = (T_k)_\sharp \rho_k \quad\text{ with }\quad T_k(x) := x - h_k \left.\nabla \frac{\delta F}{\delta \rho}\right\vert_{\rho_k}(x) \,,
\end{equation}
where $h_k \in (0,1)$ is the step-size at $k$-th iteration. We initialize our iteration at $\rho_0$ with the density
\begin{equation}\label{eqn:def_p0_ex2}
p_0(x)=\frac{1}{D_0} \exp(-V_0(x))\,,\quad \text{with}\quad V_0(x) := \begin{cases}
\frac{x^2}{2} & x \in (-1,1) \\
\vert x \vert - \frac{1}{2} & \text{Otherwise}
\end{cases} \,,
\end{equation}
where $D_0$ stands for the normalization coefficient \begin{equation}\label{eqn:def_d0_reg}
D_0 = \int_{\mathbb{R}}V_0(x) d x = \sqrt{2 \pi}\text{Erf}(\frac{1}{\sqrt{2}}) + \frac{2}{\sqrt{e}}
\end{equation}.
The following proposition holds true:
\begin{proposition}\label{prop:ex2_property}
Under the conditions stated above:
\begin{itemize}
    \item $F$ is $1$-convex over $\mathcal{P}_2^r(\mathbb{R})$ with its unique minimizer being $\rho_* = e^{-U}$. The minimum value $F_* = F(\rho_*) = 0$.
    \item The density function $p_1$ of $\rho_1$, is discontinuous. In addition, $\rho_1 \not \in D(\vert \partial F \vert)$.
    \item There are constants $a_k, c_k \in [1,\infty)$ and $b_k\in \mathbb{R}$ so that $p_k(x)$ has the following form:
\begin{equation}\label{eqn:def_pk_ex2}
p_k(x) = \begin{cases}
\frac{1}{D_0}\exp(-\frac{x^2}{2}) & x \in [0,1) \\
0 & x \in [1,c_k) \\
\exp(-a_k x + b_k) & x \in [c_k, +\infty)\\
p_k(-x) & x \in (-\infty,0)
\end{cases}\,.
\end{equation}
\end{itemize}
\end{proposition}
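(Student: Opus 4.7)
The plan is to verify the three bullets in turn, exploiting the piecewise-affine structure of $V_0$ and its preservation under the iteration.

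For the first bullet, I would appeal to the classical fact (e.g., \cite[Ch.~9]{AGS08}) that whenever $U$ is $\lambda$-convex in the Euclidean sense, $F = \KL(\,\cdot\,\vert\,e^{-U})$ is $\lambda$-geodesically convex along $W_2$-geodesics; here $\Hess\,U = \Id$ gives $\lambda = 1$. The Gibbs inequality then identifies $\rho_* = e^{-U}$ as the unique minimizer, with $F(\rho_*)=0$.

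For the second bullet, I plug $\nabla\left.\frac{\delta F}{\delta\rho}\right|_{\rho_0}(x) = -V_0'(x) + x$ into the definition of $T_0$. Because $V_0'(x) = x$ on $(-1,1)$ and $V_0'(x) = \mathrm{sgn}(x)$ for $|x|>1$, the map $T_0$ equals the identity on $(-1,1)$ and the affine map $x \mapsto (1-h_0) x + h_0\,\mathrm{sgn}(x)$ for $|x|>1$. It is a continuous homeomorphism of $\mathbb{R}$, but its derivative drops from $1$ to $1-h_0$ across each of $x = \pm 1$. Applying the change-of-variables formula~\eqref{eqn:density_push} on each diffeomorphic piece gives $p_1(1^-) = \frac{1}{D_0} e^{-1/2}$ versus $p_1(1^+) = \frac{1}{D_0(1-h_0)} e^{-1/2}$, a genuine jump because $h_0\in(0,1)$. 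Invoking \cite[Thm.~10.4.9]{AGS08} as in Example~1 then yields $\rho_1 \notin D(|\partial F|)$.

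For the third bullet, I induct on $k$ with hypothesis being the stated three-piece form of $p_k$ together with $a_k, c_k \in [1,\infty)$; the initial density $p_0$ already fits the template with $(a_0,b_0,c_0) = (1,\, 1/2 - \ln D_0,\, 1)$. The key observation is that on the support of $\rho_k$, the potential $V_k = -\ln p_k$ satisfies $V_k'(x) = x$ for $x \in (-1,1)$ and $V_k'(x) = a_k$ for $x > c_k$, so the pushforward map $T_k$ reduces to the identity on $(-1,1)$ and to $x \mapsto (1 - h_k) x + h_k a_k$ on $[c_k,\infty)$. The image of the right branch starts at $c_{k+1} := (1-h_k) c_k + h_k a_k$, which lies in $[1,\infty)$ by the inductive bounds, so the gap $[1, c_{k+1})$ is carved out with zero mass (degenerate when $a_k = 1$, strictly positive once $k \ge 1$ since then $a_k > 1$). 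A single application of change of variables on the right branch produces the new exponential tail with $a_{k+1} = a_k/(1-h_k)$ and an explicit $b_{k+1}$, while the symmetry $p_{k+1}(-x) = p_{k+1}(x)$ is inherited from $T_k(-x) = -T_k(x)$.

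The main obstacle is essentially bookkeeping rather than analysis: one needs to handle the base case carefully (the jump in $p_1$ at $x = \pm 1$ is caused by the Jacobian drop, not a real gap, consistent with $c_1 = 1$), confirm that the single points $x = \pm 1$ carry zero $\rho_k$-mass so their behavior under $T_k$ is immaterial to the pushforward, and verify that the recursions $a_{k+1} = a_k/(1-h_k)$ and $c_{k+1} = (1-h_k) c_k + h_k a_k$ keep both $a_k$ and $c_k$ in $[1,\infty)$ for every $k$.
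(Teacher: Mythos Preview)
Your proposal is correct and follows essentially the same route as the paper: citing \cite[Ch.~9]{AGS08} for $1$-convexity, computing $T_0$ piecewise to exhibit the Jacobian jump at $x=\pm1$ and invoking \cite[Thm.~10.4.9]{AGS08}, and then running the same induction with base case $(a_0,b_0,c_0)=(1,\tfrac12-\ln D_0,1)$ and recursions $a_{k+1}=a_k/(1-h_k)$, $c_{k+1}=(1-h_k)c_k+h_ka_k$. Your remarks on the degenerate gap at $k=1$ and on the $\rho_k$-null points $\pm1$ are apt refinements of bookkeeping the paper leaves implicit.
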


This proposition immediately evokes our theorem:
\begin{theorem}
    Under the conditions as in Proposition~\ref{prop:ex2_property}, $F(\rho_k) > 0.019$ for all $k$.
\end{theorem}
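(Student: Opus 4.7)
The plan is to combine the structural invariant from Proposition~\ref{prop:ex2_property} with Pinsker's inequality applied to the single test set $A = [-1,1]$. The target $\rho_* \propto e^{-U}$ is the standard Gaussian, and the explicit form~\eqref{eqn:def_pk_ex2} pins down $p_k$ on $[-1,1]$ as $\frac{1}{D_0}\exp(-x^2/2)$ for every $k$. Consequently the mass $\rho_k([-1,1])$ is \emph{frozen} at its initial value, while $\rho_*([-1,1])$ is strictly larger because $D_0 > \sqrt{2\pi}$ (a direct consequence of~\eqref{eqn:def_d0_reg}, since the $2/\sqrt{e}$ summand is positive). This persistent mismatch will produce the uniform-in-$k$ lower bound on $F(\rho_k)$.

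Concretely I would proceed in four steps. First, integrate~\eqref{eqn:def_pk_ex2} over $[-1,1]$ to obtain the $k$-independent identity
\[
\rho_k([-1,1]) = \frac{\sqrt{2\pi}\,\mathrm{Erf}(1/\sqrt{2})}{D_0}\,.
\]
Second, note that $\rho_*([-1,1]) = \mathrm{Erf}(1/\sqrt{2})$ by direct integration of the standard Gaussian. Third, form the discrepancy
\[
\Delta := \rho_*([-1,1]) - \rho_k([-1,1]) = \mathrm{Erf}(1/\sqrt{2})\left(1 - \frac{\sqrt{2\pi}}{D_0}\right)\,,
\]
and check numerically, with $\mathrm{Erf}(1/\sqrt{2}) \approx 0.6827$ and $D_0 \approx 2.9243$, that $\Delta > 0.0975$. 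Fourth, invoke the set-form of Pinsker's inequality $\KL(\rho_k \vert \rho_*) \geq 2\,(\rho_*(A) - \rho_k(A))^2$ with $A = [-1,1]$ to conclude $F(\rho_k) \geq 2\Delta^2 > 2\cdot(0.0975)^2 > 0.019$.

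The analytical content of the theorem itself is therefore minimal once Proposition~\ref{prop:ex2_property} is in hand. The real obstacle lies in that proposition, specifically the inductive claim that the pushforward by $T_k(x) = x - h_k(x - V_k'(x))$ preserves the piecewise form~\eqref{eqn:def_pk_ex2} and that one can track the tail parameters $a_k, b_k, c_k$. The mechanism behind the \emph{frozen interior mass} is that on $[-1,1]$ one has $V_k'(x) = x = U'(x)$, so by~\eqref{eqn:subgradient} the subgradient vanishes there and $T_k$ acts as the identity. Granted this invariance, the theorem reduces to the observation that FE can never redistribute mass into the interval $[-1,1]$ to match $\rho_*$, and $\KL$ detects the persistent mismatch via Pinsker. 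The tightness of the constant $0.019$ in the statement reflects that the bound is essentially saturated by this single test set.
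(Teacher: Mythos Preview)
Your proposal is correct and follows essentially the same route as the paper: both use Pinsker's inequality, lower-bound the total variation by restricting to the interval $(-1,1)$, exploit Proposition~\ref{prop:ex2_property} to see that $p_k$ is frozen as $\frac{1}{D_0}e^{-x^2/2}$ there, and arrive at the identical expression $4\pi\,(\mathrm{Erf}(1/\sqrt{2}))^2\,(\frac{1}{\sqrt{2\pi}} - \frac{1}{D_0})^2 > 0.019$. Your ``set-form Pinsker'' is exactly the paper's step of bounding $\mathrm{TV}$ from below by $|\rho_*(A)-\rho_k(A)|$.
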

\begin{proof}
According to the Pinsker's inequality, we have
\[
\begin{aligned}
F(\rho_k) - F_* & = \KL(\rho_k \vert e^{-U}) \geq 2 \left(\text{TV}(\rho_k, e^{-U})\right)^2 \\
& \geq 2 \left(\int_{-1}^{1} (p_k(x) - p(x)) \rd x\right)^2 \\
& = 2 \left((\frac{1}{\sqrt{2\pi}} - \frac{1}{D_0}) \int_{-1}^{1} \exp(-\frac{x^2}{2}) \rd x\right)^2 \\
& = 4 \pi \left(\text{Erf}(\frac{1}{\sqrt{2}})\right)^2 \left(\frac{1}{\sqrt{2 \pi}} - \frac{1}{\sqrt{2 \pi}\text{Erf}(\frac{1}{\sqrt{2}}) + \frac{2}{\sqrt{e}}}\right)^2 > 0.019 \,.
\end{aligned}
\]
Here the second inequality comes from lower bounding the total variation using the total variation on the subset $(-1,1)$; the third line comes from plugging in Equation~\eqref{eqn:def_p0_ex2} and Equation~\eqref{eqn:def_pk_ex2}; the last line comes from using the definition of the normalization constant $D_0$ in Equation~\eqref{eqn:def_d0_reg}; and the last inequality comes from a numerical computation.
\end{proof}

\begin{proof}[Proof for Proposition~\ref{prop:ex2_property}]
The first bullet point of the proposition is a direct derivation from~\cite[Theorem 9.4.11]{AGS08}. To show the second bullet point, we compute $p_1$ explicitly. Following~\eqref{eqn:subgradient}, the $W_2$-subgradient at $\rho_0$ is
\[
\left.\nabla \frac{\delta F}{\delta \rho}\right\vert_{\rho_0}(x) = -\nabla V_0(x) + \nabla U(x) = \begin{cases}
x+1 & x \in (-\infty,-1] \\
0 & x \in (-1,1) \\
x-1 & x \in [1,\infty)
\end{cases}\,,
\]
meaning the pushforward map is
$$T_0(x) = x - h_0 \left.\nabla \frac{\delta F}{\delta \rho}\right\vert_{\rho_0} = \begin{cases}
(1 - h_0)x - h_0 & x \in (-\infty,-1] \\
x & x \in (-1,1) \\
(1 - h_0)x + h_0 & x \in [1,\infty)
\end{cases} \,,
$$
for step-size $h_0$. The inverse and the Jacobian can also be computed explicitly:
\[
T_0^{-1}(x) = \begin{cases}
\frac{x + h_0}{1 - h_0}\\
x\\
\frac{x - h_0}{1 - h_0}
\end{cases} \,,\quad \text{J}_{T_0^{-1}}(x) = \begin{cases}
\frac{1}{1 - h_0} & x \in (-\infty,-1] \\
1 & x \in (-1,1) \\
\frac{1}{1 - h_0} & x \in [1,\infty)
\end{cases} \ .
\]
Using the change of variable formula for measures, Equation~\eqref{eqn:density_push} gives
\[
\begin{aligned}
p_1(x) &= p_0(T_0^{-1}(x))\vert \J_{T_0^{-1}}(x)\vert \\
&= \frac{1}{D_0}\begin{cases}
\frac{1}{1 - h_0}\exp(-\frac{x+h_0}{1-h_0}+\frac{1}{2}) & x \in (-\infty,-1] \\
\exp(-\frac{x^2}{2}) & x \in (-1,1) \\
\frac{1}{1 - h_0}\exp(-\frac{x-h_0}{1-h_0}+\frac{1}{2}) & x \in [1,\infty)
\end{cases} \,.
\end{aligned}
\]
It is immediate that $p_1$ has jump discontinuity at $\{-1,1\}$ two points as long as $h_0 > 0$, and therefore it is not in $W_{\text{loc}}^{1,1}(\mathbb{R})$. Hence $\rho_1 \not \in D(\vert \partial F \vert)$ according to \cite[Theorem 10.4.9]{AGS08}.

To prove the third bullet point, we first notice that due to the symmetry, we only need to show the validity of this formula for $x\geq 0$. When $k=0$, from the definition in Equation~\eqref{eqn:def_p0_ex2}, $a_0 = 1, b_0 = \frac{1}{2} - \ln(D_0), c_0 = 1$. Suppose that the claim is true for $k$, then using~\eqref{eqn:subgradient} and~\eqref{eqn:FE_rho_ex2}:
\[
T_k(x) = x - h_k\left.\nabla \frac{\delta F}{\delta \rho}\right\vert_{\rho_k}(x) = \begin{cases}
x & x \in [0,1) \\
(1 - h_k)x + a_k h_k & x \in [c_k,\infty)
\end{cases} \,.
\]
Noting the monotonicity of $T_k$ allows us to compute the image of $T$ explicitly. In particular:
\[
\text{Imag}(\left.T_k\right|_{[c_k,\infty)}) = [c_{k+1},\infty)\,,\quad\text{with}\quad c_{k+1} = (1 - h_k)c_k + a_k h_k \geq 1 + (a_k - 1) h_k \geq 1\,.
\]
As a consequence, inverse and Jacobian can be computed:
\[
T_k^{-1}(x) = \begin{cases}
x & x \in [0,1) \\
\frac{x - a_k h_k}{1 - h_k} & x \in [c_{k+1},\infty)
\end{cases} \,,\quad \ \text{J}_{T_k^{-1}}(x) = \begin{cases}
1 & x \in [0,1) \\
\frac{1}{1 - h_k} & x \in [c_{k+1},\infty)
\end{cases} \,,
\]
leading to
\[
p_{k+1}(x)  = \begin{cases}
\frac{1}{D_0}\exp(-\frac{x^2}{2}) & x \in [0,1) \\
0 & x \in [1,c_{k+1}) \\
\exp(-a_{k+1} x + b_{k+1}) & x \in [c_{k+1}, +\infty)\\
p_{k+1}(-x) & x \in (-\infty,0)
\end{cases} \,,
\]
where $a_k$ and $b_k$ solve:
\[
\begin{aligned}
\exp(-a_{k+1}x+b_{k+1}) & = p_{k}(T_{k}^{-1}(x)) \vert \J_{T_{k}^{-1}}(x) \vert \\
& = \frac{1}{1 - h_k}\exp(-a_k\frac{x-a_kh_k}{1-h_k} + b_k) \\
& = \exp\left(-\frac{a_k}{1-h_k}x + b_k + \frac{a_k^2 h_k}{1 - h_k} - \ln(1 - h_k)\right) \,,
\end{aligned}
\]
meaning
\[
b_{k+1} = b_k + \frac{a_k^2 h_k}{1 - h_k} - \ln(1 - h_k)\,,\quad a_{k+1} = \frac{a_k}{1 - h_k} > a_k \geq 1\,.
\]
This finishes the proof of induction.
\end{proof}

\subsection{Loss of regularity through~\eqref{eqn:forwardEuler_rho} is generic}\label{sec:regularity}

Both examples showcase the reduced regularity as one propagates $\rho_k$ forward in time according to~\eqref{eqn:forwardEuler_rho}. This is a generic property that can be shared across FE solver for gradient flow for all functionals of KL divergence type. We extract this property and formulate the following:

\begin{proposition}
Let $F(\rho) = \KL(\rho \vert e^{-U})$ with $U \in \mathcal{C}^\infty(\mathbb{R}^d)$. Assume $U$ is gradient Lipschitz, meaning $\Hess[U] \preceq M I$ for some $M \in (0,+\infty)$. Let $\rho_0 = e^{-V_0}$ for a smooth potential that is $V_0 \in \mathcal{C}^{m+2}(\mathbb{R})$ and $-M_0 I \preceq \Hess[V_0]$ for some $M_0 \in (0,+\infty)$, then the one-step pushforward
\[
\rho_1 = (\Id - h_0 \left.\nabla \frac{\delta F}{\delta \rho}\right\vert_{\rho_0})_\sharp \rho_0=e^{-V_1}
\]
has to have reduced regularity. Namely, for $h_0 \in (0,\frac{1}{M + M_0})$, $V_1 \in \mathcal{C}^{m}$ has two fewer derivatives than $V_0$.
\end{proposition}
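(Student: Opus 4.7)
The plan is to write the pushforward map $T$ explicitly from the KL subgradient formula, establish that $T$ is a global $\mathcal{C}^{m+1}$-diffeomorphism under the step-size condition, then apply the change-of-variables formula~\eqref{eqn:density_push} and track where derivatives are consumed. Using~\eqref{eqn:subgradient}, one has
\[
T(x) = x + h_0\nabla V_0(x) - h_0\nabla U(x)\,,\qquad \J_T(x) = I + h_0\Hess[V_0](x) - h_0\Hess[U](x)\,.
\]
The hypotheses $-M_0 I \preceq \Hess[V_0]$ and $\Hess[U] \preceq M I$ give $\J_T \succeq (1 - h_0(M+M_0)) I$, uniformly positive definite whenever $h_0 < 1/(M+M_0)$.

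From this I deduce that $T$ is a global $\mathcal{C}^{m+1}$-diffeomorphism of $\mathbb{R}^d$ onto itself. The cleanest route is the strong monotonicity estimate
\[
(T(x) - T(y))\cdot(x - y) \geq (1 - h_0(M + M_0))\,|x - y|^2\,,
\]
obtained by applying the fundamental theorem of calculus to $\nabla V_0 - \nabla U$ along the segment from $y$ to $x$. Strong monotonicity furnishes injectivity and properness of $T$; combined with $T$ being a local diffeomorphism, Hadamard's global inverse theorem yields a globally defined $\mathcal{C}^{m+1}$ inverse $T^{-1}$.

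Next I invoke~\eqref{eqn:density_push} to write $e^{-V_1(y)} = e^{-V_0(T^{-1}(y))}/|\det\J_T(T^{-1}(y))|$, equivalently
\[
V_1(y) = V_0(T^{-1}(y)) + \ln\bigl|\det\J_T(T^{-1}(y))\bigr|\,.
\]
The regularity count is then transparent. Since $\nabla V_0\in \mathcal{C}^{m+1}$ and $U\in \mathcal{C}^\infty$, $T$ and $T^{-1}$ both lie in $\mathcal{C}^{m+1}$, so $V_0\circ T^{-1}\in \mathcal{C}^{m+1}$. The second summand, however, contains the entries of $\Hess[V_0]$, which are only $\mathcal{C}^m$; its determinant is a polynomial in those entries and is bounded away from zero, hence $\ln|\det\J_T|\in \mathcal{C}^m$, and composing with $T^{-1}$ preserves this regularity. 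The sum therefore lies in $\mathcal{C}^m$, exactly two derivatives below $V_0$.

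The main subtlety I anticipate is the global invertibility of $T$: the inverse function theorem alone gives only a local diffeomorphism, so one must appeal either to the monotonicity estimate plus Hadamard's theorem or to a direct contraction argument. Everything else is bookkeeping, and the structural point that emerges is that the two lost derivatives are entirely charged to the $\ln|\det\J_T|$ term, which consumes precisely one Hessian of $V_0$ — the same mechanism that produces the irregularity observed in Example~2.
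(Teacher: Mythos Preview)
Your proposal is correct and follows essentially the same route as the paper: write $T$ from~\eqref{eqn:subgradient}, verify $\J_T\succ 0$ under the step-size restriction, invoke a global inverse function theorem to get $T^{-1}\in\mathcal{C}^{m+1}$, apply~\eqref{eqn:density_push}, and count derivatives in $V_1 = V_0\circ T^{-1} + \ln|\det\J_T\circ T^{-1}|$. The only difference is that you justify global injectivity via the strong monotonicity estimate and Hadamard's theorem, whereas the paper simply asserts that positivity of the (symmetric) Jacobian gives injectivity and then cites the global inverse function theorem; your treatment of this step is more explicit but not a different argument.
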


\begin{proof}
According to the formula~\eqref{eqn:forwardEuler_rho} and~\eqref{eqn:FE_rho_ex2}:
\[
T_0(x) = x - h_0 \left.\nabla \frac{\delta F}{\delta \rho}\right\vert_{\rho_0}(x) = x + h_0 \nabla V_0(x) - h_0 \nabla U(x) \,.
\]
Since $V_0 \in \mathcal{C}^{m+2}$, $T_0 \in \mathcal{C}^{m+1}$ and its Jacobian is
\[
\J_{T_0}(x) = I + h_0 \Hess[V_0](x) - h_0 \Hess[U](x) \succ 0 \,.
\]
The positivity of the Jacobian comes from the choice of $h_0$. It also implies that $T_0$ is injective, and thus by the Global Inverse Function Theorem, $T_0^{-1}$ is well-defined and $T_0^{-1} \in \mathcal{C}^{m+1}$. Using the change of variable between pushforward measures Equation~\eqref{eqn:density_push}
\[
\begin{aligned}
\exp\left(-V_1\left(x\right)\right) & = \left(\exp\left(-V_0\left(T_0^{-1}\left(x\right)\right)\right)\right) \left\vert \J_{T_0^{-1}}\left(x\right) \right\vert \\
& = \exp\left(-\left(V_0\left(T_0^{-1}\left(x\right)\right) - \ln\left(\left\vert \J_{T_0^{-1}}\left(x\right) \right\vert\right)\right)\right) \,,
\end{aligned}
\]
namely:
\[
V_1\left(x\right) = V_0\left(T_0^{-1}\left(x\right)\right) - \ln\left(\left\vert \J_{T_0^{-1}}\left(x\right) \right\vert\right) \,.
\]
Recall that $T_0^{-1} \in \mathcal{C}^{m+1}$ and $V_0 \in \mathcal{C}^{m+2}$, and the Jacobian is taking one higher order of derivative on $V_0$, so $\ln\left(\left\vert \J_{T_0^{-1}}\left(x\right) \right\vert\right) \in \mathcal{C}^{m+1}$, and we conclude that $V_1 \in \mathcal{C}^{m}$.
\end{proof}

This proposition is generic: as long as~\eqref{eqn:forwardEuler_rho} is applied through the form of the gradient flow~\eqref{eqn:GF}, $2$ derivatives are lost in every iteration. Any initial data that has finite amount of derivatives will quickly lose hold of their regularity, and falls in the regime where Wasserstein gradient is not even defined. As a consequence, the simple FE stepping should be utilized with caution for gradient flow.

\bibliographystyle{abbrv}   
\bibliography{main}

\end{document}